\DeclareMathAlphabet{\mathfrak}{U}{euf}{m}{n}
\SetMathAlphabet{\mathfrak}{bold}{U}{euf}{b}{n}
\theoremstyle{definition}
\renewcommand{\phi}{\varphi}
\renewcommand{\epsilon}{\varepsilon}
\newcommand{\blue}{\textcolor{blue}}
\newtheorem{df}{Definition}[section]
\newtheorem{prp}[df]{Proposition}
\newtheorem{ex}[df]{Example}
\newtheorem{rem}[df]{Remark}
\begin{document}
%
\title{Image recognition via Vietoris-Rips complex}
%
%
%
%

\author{Yasuhiko~Asao,~\IEEEmembership{Fukuoka~University,}
        Jumpei~Nagase,~\IEEEmembership{Shibaura~Institute~of~Technology
        Ryotaro~Sakamoto,~\IEEEmembership{RIKEN Center for Advanced Intelligence Project}
        and
        Shiro~Takagi,~\IEEEmembership{Independent Researcher}
        }
\IEEEcompsocitemizethanks{\IEEEcompsocthanksitem Y. Asao is with Fukuoka university, 8-19-1, Nanakuma, Fukuoka, Japan.\protect\\
E-mail: asao@fukuoka-u.ac.jp
\IEEEcompsocthanksitem J. Nagase is with Shibaura Institute of Technology, 307 Fukasaku, Minuma-ku, Saitama City, Saitama, 337-8570, Japan.\protect\\
E-mail:nb20106@shibaura-it.ac.jp
\IEEEcompsocthanksitem R. Sakamoto is with RIKEN, 2-1 Hirosawa, Wako, Saitama, 351-0198, Japan.\protect\\
E-mail: r-sakamoto@keio.jp
\IEEEcompsocthanksitem S. Takagi is an independent researcher.\protect\\
E-mail: takagi4646@gmail.com

\IEEEcompsocthanksitem  Authors are listed in alphabetical order.
}
}

\IEEEtitleabstractindextext{%
\begin{abstract}
Extracting informative features from images has been of capital importance in computer vision. In this paper, we propose a way to extract such features from images by a method based on algebraic topology. 
To that end, we construct a weighted graph from an image, which extracts local information of an image. 
By considering this weighted graph as a pseudo-metric space, we construct a Vietoris-Rips complex with a parameter $\varepsilon$ by a well-known process of algebraic topology. 
We can extract information of complexity of the image and can detect a sub-image with a relatively high concentration of information from this Vietoris-Rips complex. 
The parameter $\varepsilon$ of the Vietoris-Rips complex produces robustness to noise. 
We empirically show that the extracted feature captures well images' characteristics.
\end{abstract}

\begin{IEEEkeywords}
Image processing, topological data analysis, Vietoris-Rips complex
\end{IEEEkeywords}}

\maketitle

\IEEEdisplaynontitleabstractindextext

%
\IEEEpeerreviewmaketitle

\IEEEraisesectionheading{
\section{Introduction}
\label{section:introduction}
}
Image processing is one of the major areas of computer science. 
This area has been growing rapidly in recent years with the development of machine learning researches. 
It encompasses image recognition \cite{Khan20}, object detection \cite{Zou19}, and segmentation \cite{Minaee21} and they are also being actively pursued and utilized throughout the industry \cite{Litjens17,Cheng20}. One of the important subjects is to extract a geometric property and a feature invariant under some operations from data \cite{Bronstein21}. 
This is also of great importance to the machine learning research community. 
While many pieces of researches in this direction have a tremendous impact on practical applications, they are usually based on heuristic engineering methods. On the other hand, the method proposed in this paper can reduce such heuristics.

Recently, topological data analysis, application of algebraic topology to data analysis, is getting a lot of attention \cite{Edelsbrunner00}. 
It is an effective mathematical technique to extract features from data.
In particular, persistent homology is intensively studied and employed for various applications \cite{Edelsbrunner08}. 
Since topological data analysis is based on a mathematically rigorous procedure, it provides theoretically interpretable features in a less heuristic manner. Our method can be classified into topological data analysis and is enjoying such benefits.

In the present paper, we propose a way to detect areas where many colors concentrate in an image by using powerful tools in algebraic topology. 
To be precise, we translate our visual recognition process as {\it simplicial complex}, and we consider 0-th persistent homology of it for abstraction.
This translation is based on the observation that the human's process of recognizing an image from global to local parts corresponds to constructing a structure of simplicial complex on a set of areas in the image. 

For the construction of the simplicial complex from an image, we construct a weighted graph, whose weights are characterized by the color information of pixels of the image. This graph is similar to, or in a sense a generalization of, the \textit{quadtree}\cite{Finkel74}. 
Then we regard this weighted graph as a pseudo-metric space by considering the shortest path length of the weighted graph as a metric, and we obtain the desired object by a certain procedure.
The simplicial complex which we construct here is called \textit{Vietoris-Rips complex}, a basic tool in algebraic topology \cite{Carlsson09}. 
The Vietoris-Rips complex constructed from an image retains important characteristics of the original image and represents the image in a mathematically and computationally tractable manner. 
For example, we can naturally recover the \textit{complexity} of images defined in previous literature \cite{Asao20} by considering its persistent homology (Proposition \ref{prop:VR-depth}). 
Furthermore, we can find areas where information highly concentrate in the image by investigating maximal simplices of the Vietoris-Rips complex. 
We demonstrate that this procedure captures well images' characteristics by applying it to object detection.

Recently, many researchers study the explication of explanatory variables of prediction via machine learning \cite{Das20}. 
For example, some researchers propose a way to visualize features of data that greatly influence the model's prediction \cite{Zeiler14}.
However, features extracted from data are currently not given a theoretical interpretation. 
On the other hand, our method is one of the non-learning methods, which presents a universal and new perspective in image analysis, and has a theoretical interpretation. 
In addition, from the way the weighted graph is constructed, our method is invariant to image rotation. 
Moreover, the method for constructing weighted graphs from images is itself a novel idea. 

The structure of the present paper is as follows. 
In \S \ref{section:introduction} we describe the background and an overview of the current paper. 
We  review related literature in \S \ref{section:related-works}. 
In \S \ref{section:construction}, we will explain how to construct the weighted graph, the pseudo-metric space, and the Vietoris-Rips complex from an image. 
In \S \ref{section:connected-component}, we introduce how to extract the feature from an image by using the associated Vietoris-Rips complex. 
In \S \ref{section:experiment}, we demonstrate the result of our numerical experiment, where we show that our method can be used for object detection from photographic images. 

\section{Related Works}
\label{section:related-works}

Image processing and analysis by representing an image as a graph has been studied intensively \cite{Lezoray17}. 
For instance, an image can be represented as a 2D lattice or a skeleton graph.
While most of the constructions of graphs that we know use only pixels, our graphs have vertices corresponding to sub-regions of a given image. Also, we can completely automatically construct our graphs, while the previous ones are constructed somewhat heuristically. 

Barroso et al. utilize a graph called {\it quadtree} for object detection \cite{Barroso04}. Region quadtree is another graph construction where vertices correspond to sub-regions \cite{Samet84}. 
Our construction of graphs differs from this one at least in terms of the following point. That is, all square sub-regions correspond to vertices in our construction, while only sub-regions which are cut into quarters correspond to vertices in quadtrees. As a result, the obtained graphs of us are not tree. As we explain in Remark \ref{generalization}, our graph contains quadtree as a subgraph. This property gives the graph a richer geometric structure, and it may extract more features of images than quadtrees do.



Several studies apply persistent homology for image analysis \cite{Carlsson08,Romero14,Giansiracusa17,Garin19,Chung18,Kovalevsky89,Luo19,Letscher07}. Namely, they extract features of images by analyzing geometric structures constructed from images. For example, Yang--Wohlberg \cite{Yang19} and Garin--Tauzin \cite{Garin19} utilize the extracted features for MNIST \cite{Lecun98} classification, Chung et al. \cite{Chung18} use them for skin disease image segmentation and classification. Our research is in the same direction. Namely, we construct a weighted graph characterized by the color differences between sub-regions. Then, we consider Vietoris-Rips complex based on this graph and analyze its geometric structure by calculating its 0-th persistent homology. We have no limitation nor need preprocessing on images, while some of the above previous works need.

Recently, learning-based methods such as deep learning have been developed in the field of image recognition \cite{Krizhevsky12,He16}. 
Although those methods have shown high performance by automatically learning features from data, there are some problems with the obtained features. 
For example, convolutional neural networks are known to be not invariant with respect to some geometric transformations \cite{Sabour17} and the learned features are not necessarily interpretable \cite{Zhang20}.
Therefore, acquiring interpretable features which are invariant with respect to some geometric transformations is attracting attention as a complementary way to learning-based methods \cite{Robert14,Kuzminykh18,Kosiorek19,Shorten19,Bronstein21}.
As we mentioned in the introduction, we propose a method to extract the feature which is mathematically interpretable and invariant to image rotations.

\section{Construction of the weighted graph and the Vietoris-Rips complex}
\label{section:construction}
As mentioned in the introduction, our approach is based on the observation that the process of recognizing an image from global to local parts corresponds to constructing a structure of simplicial complex on a set of areas in the image. 
For the accurate execution of our idea, we should work in a mathematically rigorous way.
To that end, in this section, we state our formulation of images, and we recall the definitions of a simplicial complex and a Vietoris-Rips complex. 
After that, we explain our construction of the weighted graph and the Vietoris-Rips complex associated with an image.


\subsection{Construction of the weighted graph and the pseudo-metric space}

For any positive integer $M$, we define 
\[
[M] := \{0, 1, \ldots, M-1\}.
\]

\begin{df}[Image]
Throughout this paper, for any positive integers  $M$, $N$, and $C$, we consider a map 
\[
f \colon [M]\times [N] \longrightarrow [C]
\]
as a $C$-color $M\times N$-pixel image. 
Here we regard $[C]$ as a color set (representing a $C$-step gray scale). 
In the following, we call such a map just an {\it image}. We also call the cardinality of the set 
$\# f([M]\times [N])$ the {\it number of colors} of the image $f$.
\end{df}

\begin{ex}\label{eg1}
The following $3\times 3$-pixel black-and-white image is identified with the map represented on the right by associating $0$ and $1$ to black and white, respectively.
\begin{figure}[H]
\centering
\begin{tikzpicture}
\draw[thick] (0,0) rectangle (3,3) ;
\filldraw[thick] (0, 2) rectangle (3,3);
\filldraw[thick] (1, 1) rectangle (3,2);
\filldraw[thick] (2, 0) rectangle (3,1);
\end{tikzpicture}
$\  $
\begin{tikzpicture}
\draw[thick] (0,0) rectangle (3,3) ;
\coordinate (a) at (0, 1.5) node [left] at (a) {$=\ \ \ $};
\coordinate (a) at (0.5, 0.5) node at (a) {{\Large 1}};
\coordinate (a) at (0.5, 1.5) node  at (a) {{\Large 1}};
\coordinate (a) at (1.5, 0.5) node at (a) {{\Large 1}};
\coordinate (a) at (1.5, 1.5) node  at (a) {{\Large 0}};
\coordinate (a) at (1.5, 2.5) node  at (a) {\Large 0};
\coordinate (a) at (0.5, 2.5) node  at (a) {\Large 0};
\coordinate (a) at (2.5, 2.5) node  at (a) {\Large 0};
\coordinate (a) at (2.5, 1.5) node  at (a) {\Large 0};
\coordinate (a) at (2.5, 0.5) node  at (a) {\Large 0};

\draw[thick](0, 1) -- (3, 1);
\draw[thick](0, 2) -- (3, 2);
\draw[thick](1, 0) -- (1, 3);
\draw[thick](2, 0) -- (2, 3);
\end{tikzpicture}
\caption{}
\end{figure}
\noindent
That is to say, we identify the $3\times 3$-pixel black-and-white image on the left with the map 
\[
f \colon [3]\times [3] \longrightarrow [2]; \,  f(i, j) := \begin{cases} 1 & (i, j) \in \{ (0, 1), (0, 2), (1, 2)\}, \\ 0  & {\rm others.}\end{cases}
\]
\end{ex}
\begin{df}[Square]
We define 
\[
[M, a, m] := \{m, m+1, \dots, m+a-1\} \subset [M]
\]
for any integers $a$ and $m$ satisfying $1 \leq a \leq M$ and $0 \leq m < M - a$. 
We also define 
\begin{align*}
    \square_{M, N} 
    &:= \left\{ [M, a, m] \times [N, a, n]  \,\, \middle|  
    \begin{array}{l} 
    1\leq a \leq \min\{M, N\}, 
    \\
    0 \leq m \leq M-a, 
    \\
    0 \leq n \leq N-a
    \end{array}
    \right\}. 
\end{align*}
We call each element of $\square_{M, N}$ a {\it square}. 
We define the {\it size} of a square $[M, a, m] \times [N, a, n]$ to be $a$.
We say that a set $[M, a, m] \times [N, b, n]$ is a {\it rectangle} if it is not necessarily a square (cf. Figure\ref{fig}). 
\end{df}
\begin{figure}[H]
\centering
\begin{tikzpicture}
\draw[very thick] (0,0) rectangle (3,3) ;
\draw[gray][ultra thin] (0.75,0) -- (0.75,3);
\draw[gray][ultra thin] (1.5,0) -- (1.5,3);
\draw[gray][ultra thin] (2.25,0) -- (2.25,3);
\draw[gray][ultra thin](0, 0.75) -- (3, 0.75);
\draw[gray][ultra thin] (0, 1.5) -- (3, 1.5);
\draw[gray][ultra thin] (0, 2.25) -- (3, 2.25);
\draw[very thick][dotted][red] (0.75, 0.75) rectangle (3, 2.25);
\end{tikzpicture}
\caption{
The square surrounded by the solid  thick line
represents 
$[4]\times [4]$, 
and the rectangle surrounded the dotted line represents 
$[4, 3, 1]\times[4, 2, 1]$. 
}
\label{fig}
\end{figure}
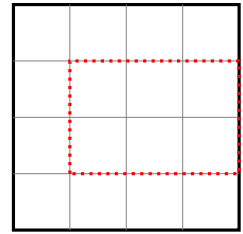

Next, we construct a weighted graph $(V_{M, N}, E_{M, N}, w_f)$ from a $C$-color $M \times N$-pixel image $f \colon [M] \times [N] \longrightarrow [C]$. 

\begin{df}[Weighted graph]\label{wgraph}
We define the set of vertices $V_{M,N}$ by 
\[
V_{M,N} := \square_{M, N}, 
\] 
and we define the set of edges $E_{M, N}$ by
\begin{align*}
    E_{M, N} := 
\left\{ \{A, B\} \,\, \middle| 
\begin{array}l
A \subset B \in \square_{M, N},  
\\
\textrm{the size of $B$}  = 1 + \textrm{the size of $A$} 
\end{array}
\right\}. 
\end{align*}
We define the weight $w_f \colon E_{M, N} \longrightarrow \mathbb{Z}_{\geq 0}$ associated with the image $f$ by 
\[
w_f(\{A, B\}) = |\# f(B) - \# f(A)| \geq 0.
\]
\end{df}

\begin{ex}\label{exgraph}
We illustrate the graph $(V_{3,3}, E_{3,3})$.  
The top vertex corresponds to $[3] \times [3]$, each of the vertices in the middle row corresponds to each of the squares in $\Box_{3,3}$ with size $2$, and each of the bottom vertices corresponds to each of the pixels of $[3] \times [3]$. 
\begin{figure}[H]
\centering
\begin{tikzpicture}
\draw[thick] (0,0) rectangle (3,3) ;
\draw[->][thick] (1.5,-0.75)--(1.5,-1.875);
\node(P) at (1.5,-2.25) {};
\draw[thick](0, 1) -- (3, 1);
\draw[thick](0, 2) -- (3, 2);
\draw[thick](2, 0) -- (2, 3);
\draw[thick](1, 0) -- (1, 3);
\end{tikzpicture}
\begin{tikzpicture}
\coordinate (a) at (0, 1.5)  ;
\coordinate (b) at (-2, 0)  ;
\coordinate (c) at (-1, 0) ;
\coordinate (d) at (1, 0) ;
\coordinate (e) at (2, 0) ;
\coordinate (f) at (-4, -1.5)  ;
\coordinate (g) at (-3, -1.5) ;
\coordinate (h) at (-2, -1.5) ;
\coordinate (i) at (-1, -1.5) ;
\coordinate (j) at (0, -1.5) ;
\coordinate (k) at (1, -1.5) ;
\coordinate (l) at (2, -1.5) ;
\coordinate (m) at (3, -1.5) ;
\coordinate (n) at (4, -1.5) ;
\fill (a) circle (2pt);
\fill (b) circle (2pt);
\fill (c) circle (2pt);
\fill (d) circle (2pt);
\fill (e) circle (2pt);
\fill (f) circle (2pt);
\fill (g) circle (2pt);
\fill (h) circle (2pt);
\fill (i) circle (2pt);
\fill (j) circle (2pt);
\fill (k) circle (2pt);
\fill (l) circle (2pt);
\fill (m) circle (2pt);
\fill (n) circle (2pt);
\draw[thick](a) -- (b);
\draw[thick](a) -- (c);
\draw[thick](a) -- (d);
\draw[thick](a) -- (e);
\draw[thick](b) -- (f);
\draw[thick](b) -- (g);
\draw[thick](b) -- (j);
\draw[thick](b) -- (i);
\draw[thick](c) -- (h);
\draw[thick](c) -- (g);
\draw[thick](c) -- (j);
\draw[thick](c) -- (k);
\draw[thick](d) -- (n);
\draw[thick](d) -- (m);
\draw[thick](d) -- (j);
\draw[thick](d) -- (k);
\draw[thick](e) -- (l);
\draw[thick](e) -- (m);
\draw[thick](e) -- (j);
\draw[thick](e) -- (i);
\end{tikzpicture}
\caption{}
\end{figure}
The following weighted graph corresponds to $(V_{3,3}, E_{3,3}, w_f)$ obtained from the image $f \colon [3] \times [3] \longrightarrow [2]$ defined in Example~\ref{eg1}.
\begin{figure}[H]
\centering
\begin{tikzpicture}
\draw[thick] (0,0) rectangle (3,3) ;
\draw[->][thick] (1.5,-0.75)--(1.5,-1.875);
\node(P) at (1.5,-2.25) {};

\draw[thick] (0,0) rectangle (3,3) ;
\coordinate (a) at (0.5, 0.5) node at (a) {{\Large 1}};
\coordinate (a) at (0.5, 1.5) node  at (a) {{\Large 1}};
\coordinate (a) at (1.5, 0.5) node at (a) {{\Large 1}};
\coordinate (a) at (1.5, 1.5) node  at (a) {{\Large 0}};
\coordinate (a) at (1.5, 2.5) node  at (a) {\Large 0};
\coordinate (a) at (0.5, 2.5) node  at (a) {\Large 0};
\coordinate (a) at (2.5, 2.5) node  at (a) {\Large 0};
\coordinate (a) at (2.5, 1.5) node  at (a) {\Large 0};
\coordinate (a) at (2.5, 0.5) node  at (a) {\Large 0};

\draw[thick](0, 1) -- (3, 1);
\draw[thick](0, 2) -- (3, 2);
\draw[thick](1, 0) -- (1, 3);
\draw[thick](2, 0) -- (2, 3);
\end{tikzpicture}


\begin{tikzpicture}
\coordinate (a) at (0, 1.5)  ;
\coordinate (b) at (-2, 0)  ;
\coordinate (c) at (-1, 0) ;
\coordinate (d) at (1, 0) ;
\coordinate (e) at (2, 0) ;
\coordinate (f) at (-4, -1.5)  ;
\coordinate (g) at (-3, -1.5) ;
\coordinate (h) at (-2, -1.5) ;
\coordinate (i) at (-1, -1.5) ;
\coordinate (j) at (0, -1.5) ;
\coordinate (k) at (1, -1.5) ;
\coordinate (l) at (2, -1.5) ;
\coordinate (m) at (3, -1.5) ;
\coordinate (n) at (4, -1.5) ;
\coordinate (p) at (-1, 0.75) node [left] at (p) {$0$};
\coordinate (p) at (-0.5, 0.75) node [right] at (p) {$0$};
\coordinate (p) at (0.5, 0.75) node [right] at (p) {$1$};
\coordinate (p) at (1, 0.75) node [right] at (p) {$0$};

\coordinate (p) at (-3, -0.75) node [left] at (p) {$1$};
\coordinate (p) at (-2.5, -0.75) node [left] at (p) {$1$};
\coordinate (p) at (-1, -0.75) node [left] at (p) {$1$};
\coordinate (p) at (1, -0.75) node [right] at (p) {$1$};
\coordinate (p) at (-2, -0.75) node [left] at (p) {$1$};
\coordinate (p) at (-1.5, -0.75) node [left] at (p) {$1$};
\coordinate (p) at (-1, -0.75) node [right] at (p) {$1$};
\coordinate (p) at (-0.5, -0.75) node [right] at (p) {$1$};
\coordinate (p) at (2, -0.75) node [right] at (p) {$0$};
\coordinate (p) at (0.5, -0.75) node [left] at (p) {$0$};
\coordinate (p) at (1, -0.75) node [left] at (p) {$0$};
\coordinate (p) at (2, -0.75) node [left] at (p) {$0$};
\coordinate (p) at (2.5, -0.75) node [right] at (p) {$1$};
\fill (a) circle (2pt);
\fill (b) circle (2pt);
\fill (c) circle (2pt);
\fill (d) circle (2pt);
\fill (e) circle (2pt);
\fill (f) circle (2pt);
\fill (g) circle (2pt);
\fill (h) circle (2pt);
\fill (i) circle (2pt);
\fill (j) circle (2pt);
\fill (k) circle (2pt);
\fill (l) circle (2pt);
\fill (m) circle (2pt);
\fill (n) circle (2pt);
\draw[very thin, blue](a) --  (b);
\draw[very thin, blue](a) -- (c);
\draw[thick,red](a) -- (d);
\draw[very thin, blue](a) -- (e);
\draw[thick, red](b) -- (f);
\draw[thick, red](b) -- (g);
\draw[thick, red](b) -- (j);
\draw[thick, red](b) -- (i);
\draw[thick, red](c) -- (h);
\draw[thick, red](c) -- (g);
\draw[thick, red](c) -- (j);
\draw[thick, red](c) -- (k);
\draw[very thin, blue](d) -- (n); 
\draw[very thin, blue](d) -- (m); 
\draw[very thin, blue](d) -- (j); 
\draw[very thin, blue](d) -- (k); 
\draw[thick, red](e) -- (l);
\draw[thick, red](e) -- (m);
\draw[thick, red](e) -- (j);
\draw[thick, red](e) -- (i);
\end{tikzpicture}
\caption{The weighting of the graph obtained from the image in Example \ref{eg1}. Edges with weight 0 are drawn thin (blue), and those with weight 1 is drawn thick (red). Not all numbers of weights are typed for visibility.}
\end{figure}
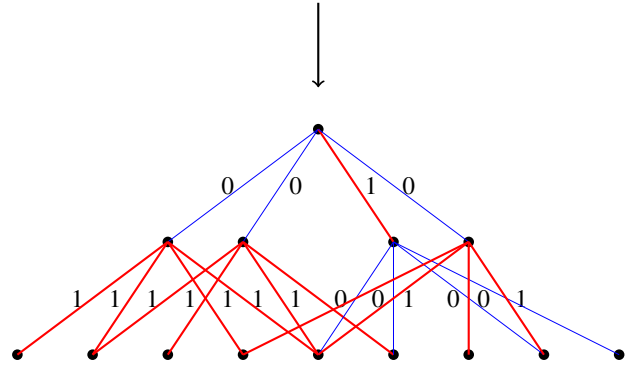
\end{ex} 
\begin{rem}
\label{generalization}
If we consider a $2^n \times 2^n$-pixel image and restrict vertices to those of the form $[2^n, 2^m, 2^k]\times [2^n, 2^m, 2^{k'}]$, then we obtain essentially the same object as a \textit{region quadtree}. One of the main differences between our construction and a quadtree is that the former has more information about the positional relationship among regions in the image. Hence the graph we construct has a quadtree as a subgraph. This property gives the graph a richer geometric structure, and it may extract more features of images than quadtrees do.
\end{rem}
We say that $d \colon X \times X \longrightarrow \mathbb{R}_{\geq 0}$ is a {\it pseudo-metric} on $X$ if the map $d$ satisfies the following:
\begin{enumerate}
\item $d(x, x) = 0$  for all $x \in X$,
\item $d(x, y) = d(y, x)$  for all $x, y \in X$,
\item $d(x, y) + d(y, z) \geq d(x, z)$ for all $x, y, z \in X$.
\end{enumerate}

By using the weight $w_f$, we define a pseudo-metric $d_f$ on $V_{M, N}$ as follows.
Take two vertices $A, B \in V_{M, N}$. 
For any path 
\[
P_{A,B} \colon A = A_0 - A_1 - \cdots - A_{n-1} - A_n = B
\]
from $A$ to $B$ in the graph $(V_{M, N}, E_{M, N})$, (that is, $\{A_i, A_{i+1}\} \in E_{M, N}$ for each integer  $0 \leq i < n$), we put 
\[
\tilde{w}_f(P_{A,B}) := \sum_{i=0}^{n-1} w_f(A_i, A_{i+1}). 
\]
We then define the pseudo-metric $d_f \colon V_{M,N} \times V_{M,N} \longrightarrow \mathbb{R}_{\geq 0}$ by 
\[
d_f(A, B) := \min_{P_{A,B}} \tilde{w}_f(P_{A,B}), 
\]
where $P_{A,B}$ runs over all the paths  from $A$ to $B$ in the graph $(V_{M,N}, E_{M,N})$. 
Note that, when $\{A, B\} \in E_{M,N}$, we have 
\[
d_f(A, B) = w_f(\{A,B\}). 
\]



\subsection{Construction of Vietoris-Rips complex}
We first briefly explain the notion of {\it simplicial complex}, which is a fundamental tool of algebraic topology.
 \cite{hatcher05} is a good reference that broadly covers this area. The notion of simplicial complexes is a generalization of polyhedra, and it consists of data of vertices, edges, faces, $\dots$, with inclusion relationship among them. 
 Every $n$-dimensional face can be reconstructed from $(n+1)$ points of its corners, hence the following definition is a good abstraction of our intuition.

\begin{df}[Simplicial complex]
Let $X$ be a set, and let $P(X)$ be its power set. A nonempty set $\emptyset \ne S \subset P(X)$ is a {\it simplicial complex} if it satisfies the following:
\[
B \subset A \in S \Rightarrow B \in S.
\]
We call elements of $S$ {\it simplices}, 
and we say that a simplex $B \in S$ is a {\it maximal simplex} if it satisfies the condition that 
\[
B \subset A \in S \Rightarrow B = A. 
\]
We call each simplex with $1$-element a {\it vertex}, each $2$-element simplex an {\it edge}, and each $3$-element simplex a {\it face}.
\end{df} 
\begin{ex}\label{exsc}
Let $X = \{a, b, c \}$ be a $3$-point set and let 
\[
\begin{cases}
S_0 := \{\emptyset, \{a\}, \{b \}, \{c\} \}, \\
S_1 := \{\emptyset, \{a\}, \{b \}, \{c\}, \{a, b\}, \{b, c\} \}, \\
S_2 := P(X) =  \{\emptyset, \{a\}, \{b \}, \{c\}, \{a, b\}, \{a, c\}, \{b, c\}, \{a, b, c\}\}.
\end{cases}
\]
Then $S_0$, $S_1$,  and $S_2$ are simplicial complexes corresponding to the following figures, respectively.
\begin{figure}[H]
\centering
\begin{tikzpicture}
\coordinate (a) at (0, 0) node [left]  {$a$} ;
\coordinate (b) at (2, 0) node [below] at (b) {$b$} ;
\coordinate (c) at (2, 1) node [above] at (c) {$c$};
\coordinate (d) at (1, -1) node [below] at (d) {$S_0$};
\fill (a) circle (2pt);
\fill (b) circle (2pt);
\fill (c) circle (2pt);

\end{tikzpicture}
\begin{tikzpicture}
\coordinate (a) at (0, 0) node [left]  {$a$} ;
\coordinate (b) at (2, 0) node [below] at (b) {$b$} ;
\coordinate (c) at (2, 1) node [above] at (c) {$c$};
\coordinate (d) at (1, -1) node [below] at (d) {$S_1$};
\fill (a) circle (2pt);
\fill (b) circle (2pt);
\fill (c) circle (2pt);

\draw[thick](b) --  (c);
\draw[thick](a) --  (b);
\end{tikzpicture}
\begin{tikzpicture}
\coordinate (a) at (0, 0) node [left]  {$a$} ;
\coordinate (b) at (2, 0) node [below] at (b) {$b$} ;
\coordinate (c) at (2, 1) node [above] at (c) {$c$};
\coordinate (d) at (1, -1) node [below] at (d) {$S_2$};
\fill[lightgray] (a)--(b)--(c)--cycle;
\fill (a) circle (2pt);
\fill (b) circle (2pt);
\fill (c) circle (2pt);

\draw[thick](b) --  (c);
\draw[thick](a) --  (c);
\draw[thick](a) --  (b);
\end{tikzpicture}
\caption{}
\end{figure}
\end{ex}

For any pseudo-metric space $X$, we can associate a simplicial complex that reflects both its metric structure and topology. See for example \cite{Carlsson09} for the detail.
\begin{df}[Vietoris-Rips complex]
{\it  The Vietoris-Rips complex} of a pseudo-metric space $(X, d)$ with a parameter $\varepsilon \geq 0$ is a simplicial complex $S_{\varepsilon} \subset P(X)$ defined as follows: 
\[
\{x_{0}, \dots, x_{n} \} \in S_{\varepsilon} \Leftrightarrow d(x_{i}, x_{j}) \leq \varepsilon \,\,\, \textrm{ for  all $i$ and $j$}. 
\]
\end{df}
\begin{ex}
Let $X = \{a, b, c\}$ be a 3-point set with the metric $d$ defined by 
\[
\begin{cases}
d(a, b) = 1, \\
d(b, c) = 1/2, \\
d(a, c) = \sqrt{5}/2.
\end{cases}
\]
Then the Vietoris-Rips complexes $S_0, S_1, S_2$ associated with $(X, d)$ and parameters $\varepsilon = 0, 1, 2$ are ones in Example \ref{exsc}.

\end{ex}

In the present paper, we consider the Vietoris-Rips complex $S(f)_{\varepsilon}$ associated with the pseudo-metric space $(V_{M, N}, d_f)$. For simplicity, we suppose that $M = N$. 

For a simplicial complex, we can compute its {\it homology groups}. See, for example, \cite{hatcher05} for the precise definition. Here we only define {\it 0-th homology}, that corresponds to the number of connected components of the given simplicial complex.

\begin{df}[0-th homology group]
For any simplicial complex $S$, we define an equivalence relation on the set of vertices of $S$ by 
\[
x\sim y \,\,\, \text{ if and only if }  \,\,\,  \{x, y\} \in S.
\]
Then the 0-th homology group of $S$ is defined to be the abelian group freely generated by the equivalence classes in $S\slash{\sim}$. 
Its rank, namely the number of equivalence classes, is called the number of connected components.
\end{df}

Especially, for any pseudo-metric space $X$, we obtain the family of homology groups of Vietoris-Rips complex parametrized by $\varepsilon \geq 0$. We call this {\it persistent homology} of $X$. 

\begin{ex}
When we consider the 0-th homology of simplicial complex, as mentioned above, it is sufficient to look at its subcomplex consisting of all simplices with at most two elements (so-called {\it 1-skelton}). 
Here we see the case of Vietoris-Rips complex $S(f)_{\varepsilon}$ obtained from the image $f \colon [3] \times [3] \longrightarrow [2]$ defined in Example~\ref{exgraph}. 
For any $0 \leq \varepsilon < 1$, we can see the connected components of $S(f)_{\varepsilon}$ by the following graph, which is obtained by deleting edges with weight 1 from the weighted graph in Example~\ref{exgraph}.
\begin{figure}[H]
    \centering
   \begin{tikzpicture}
\coordinate (a) at (0, 1.5)  ;
\coordinate (b) at (-2, 0)  ;
\coordinate (c) at (-1, 0) ;
\coordinate (d) at (1, 0) ;
\coordinate (e) at (2, 0) ;
\coordinate (f) at (-4, -1.5)  ;
\coordinate (g) at (-3, -1.5) ;
\coordinate (h) at (-2, -1.5) ;
\coordinate (i) at (-1, -1.5) ;
\coordinate (j) at (0, -1.5) ;
\coordinate (k) at (1, -1.5) ;
\coordinate (l) at (2, -1.5) ;
\coordinate (m) at (3, -1.5) ;
\coordinate (n) at (4, -1.5) ;

\fill (a) circle (2pt);
\fill (b) circle (2pt);
\fill (c) circle (2pt);
\fill (d) circle (2pt);
\fill (e) circle (2pt);
\fill (f) circle (2pt);
\fill (g) circle (2pt);
\fill (h) circle (2pt);
\fill (i) circle (2pt);
\fill (j) circle (2pt);
\fill (k) circle (2pt);
\fill (l) circle (2pt);
\fill (m) circle (2pt);
\fill (n) circle (2pt);

\draw[thick](a) --  (b);
\draw[thick](a) -- (c);

\draw[thick](a) -- (e);

\draw[thick](d) -- (n);
\draw[thick](d) -- (m);
\draw[thick](d) -- (j);
\draw[thick](d) -- (k);
\end{tikzpicture}
\caption{}
\end{figure}
\end{ex}
Hence the rank of the 0-th homology group of $S(f)_{\varepsilon}$, the number of connected components, is 7. 
For any $\varepsilon \geq 1$, the corresponding subcomplex is the whole graph, hence the rank of the 0-th homology group of $S(f)_{\varepsilon}$ is 1.

\subsection{Relation with depth of images}
\label{sec:relation_with_depth_of_images}
In the papers \cite{Asao19} and \cite{Asao19letter}, the first and the third authors defined an indicator of the complexity of images, that they call {\it depth} of images. 
In this subsection, we explain that the Vietoris-Rips complex $S(f)_\varepsilon$ covers the depth of an image $f$. 

We first recall the definition of the depth in our setting. 
For any 2-color image $f \colon [M]\times [M] \longrightarrow [2]$, 
we define the following function $\varphi_{d}$ as an index to measure how much color is non-uniformly distributed: 
\[
\varphi_{d}(f) := \min_{[M, d, \ast]\times [M, d, \ast] \in \Box_{M, M} }\left(\sum_{i,j \in [M, d, \ast]\times [M, d, \ast]}|f(i)-f(j)|\right). 
\]
In other words, the function $\varphi_{d}$ expresses the degree of bias of colors as a numerical value among the squares of size $d$.
If there is even one cell that is ``all white (black)'', its value will be 1. 
In particular, we have $\varphi_{1}(f) = 0$ for any image $f$.

\begin{df}[depth of images\cite{Asao19,Asao19letter}]
For any $M \times M$-pixel $C$-color image $f \colon [M]\times [M] \longrightarrow [C]$, 
we define the depth of $f$ by 
\[
\mathrm{depth}(f) := \frac{1}{M} \max\left\{d \in \{1, \ldots, M\} \ \middle| \ 
0 = \min_{p} 
\varphi_{d}(p \circ f) \right\},  
\]
where the map $p \colon f([M]\times [M]) \twoheadrightarrow \{0,1\}$ runs over the surjections from  $f([M]\times [M])$ to  $\{0,1\}$. 
\end{df}

\begin{prp}\label{prop:VR-depth}
Let $f \colon [M]\times [M] \longrightarrow [C]$ be an $M \times M$-pixel $C$-color image and let $S(f)_0$ be the  Vietoris-Rips complex associated with  the image $f$ and parameter $\varepsilon = 0$. 
Let $B \in S(f)_0$ be the simplex which contains the square $[M]\times [M]$ as a vertex. Then we have the following : 
\begin{itemize}
    \item[(1)]  The map $f$ is constant, that is, $f$ is a one-color image if and only if $\mathrm{depth}(f) = 1/M$. 
    \item[(2)] Suppose that $f$ is not constant. Let $d \in \mathbb{Z}_{\geq 0}$ denote the maximal size of squares in $\square_{M,M}$ that is NOT contained in $B$. 
    Then the value $M \cdot \mathrm{depth}(f)$ is equal to $d+1$. 
\end{itemize}
\end{prp}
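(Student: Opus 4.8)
The plan is to reduce everything to one clean description of the simplex $B$, which I read as the maximal simplex of $S(f)_0$ containing the vertex $[M]\times[M]$, namely
\[
B = \{\, A \in \square_{M,M} : \# f(A) = \# f([M]\times[M]) \,\}.
\]
Write $c := \# f([M]\times[M])$ for the total number of colors of $f$. Since $\varepsilon = 0$, a finite set of squares spans a simplex of $S(f)_0$ exactly when its elements are pairwise at $d_f$-distance $0$; because $d_f$ is a pseudo-metric, being at distance $0$ is an equivalence relation, so the maximal simplex through $[M]\times[M]$ is its equivalence class, i.e. the set of squares joined to $[M]\times[M]$ by a path of weight-$0$ edges. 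Thus the whole problem becomes the identification of this weight-$0$ connected component.

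The engine is the monotonicity of the color count: if $A \subseteq A'$ then $f(A) \subseteq f(A')$, hence $\# f(A) \le \# f(A')$, and in particular $\# f(A) \le c$ for every square, with equality at $A = [M]\times[M]$. For an edge $\{A, A'\}$ with $A \subset A'$ this gives $w_f(\{A, A'\}) = \# f(A') - \# f(A) \ge 0$. First I would prove the inclusion $B \subseteq \{\# f = c\}$: along any weight-$0$ path the value $\# f$ never changes, so every vertex of $B$ inherits the value $c$ of the top square. For the reverse inclusion I would take a square $A$ of size $a$ with $\# f(A) = c$ and build a nested chain $A = A_a \subset A_{a+1} \subset \cdots \subset A_M = [M]\times[M]$ whose sizes grow by one at each step; monotonicity then squeezes every intermediate value between $c$ and $c$, so each edge of the chain has weight $0$ and $A$ lies in the weight-$0$ component. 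This yields the displayed description of $B$, and hence $d = \max\{\, \mathrm{size}(A) : \# f(A) < c \,\}$ is the largest size of a square that omits at least one color.

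Next I would rewrite the depth in the same vocabulary. Fix a size $e$ and a surjection $p \colon f([M]\times[M]) \twoheadrightarrow \{0,1\}$: the inner sum of $\varphi_e$ vanishes on a size-$e$ square $Q$ precisely when $p \circ f$ is constant on $Q$, and such a surjective $p$ exists for a given $Q$ if and only if $f(Q)$ is a proper subset of the color set, i.e. $\# f(Q) < c$. Therefore $\min_p \varphi_e(p \circ f) = 0$ if and only if some size-$e$ square omits a color, a property that is downward closed in $e$ by the monotonicity above, with largest witness exactly $d$. Reading off $\mathrm{depth}$ as the normalized first scale at which uniformity becomes impossible --- the threshold separating the sizes $e \le d$ that still admit an omitted-color square from the size $e = d+1$ at which every square already realizes all $c$ colors --- then gives $M \cdot \mathrm{depth}(f) = d + 1$, which is (2). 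Statement (1) is the degenerate end of the same analysis: $f$ is constant iff $c = 1$ iff there is no surjection onto $\{0,1\}$, the case pinning $\mathrm{depth}(f) = 1/M$, whereas $c \ge 2$ already forces the single-pixel squares to omit colors, so $d \ge 1$ and $\mathrm{depth}(f) \ge 2/M$.

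The step I expect to be the main obstacle is the reverse inclusion in the description of $B$, and inside it the chain-extension claim: I must check that a square $[M,a,m] \times [M,a,n]$ with $a < M$ can always be enlarged to a size-$(a+1)$ square inside $[M] \times [M]$, handling the boundary positions $m = M-a$ and $n = M-a$ by growing to the left or downward rather than to the right or upward. A secondary point to pin down carefully is the off-by-one between the largest color-omitting size ($=d$) and the first resolving scale ($=d+1$) that produces the ``$+1$'', together with the constant case where the family of surjections $p$ is empty and $\mathrm{depth}$ takes its minimal value $1/M$ by convention.
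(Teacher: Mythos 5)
Your identification of $B$ as $\{A \in \square_{M,M} : \#f(A) = \#f([M]\times[M])\}$ is correct, and it is argued more carefully than in the paper: the paper's proof goes directly to the two inequalities $d\le e$ and $e\le d$ with $e = M\cdot\mathrm{depth}(f)-1$, silently using that $A\notin B$ exactly when $f(A)\ne f([M]\times[M])$. Your two-sided argument (weight-zero paths preserve $\#f$, and any square realizing all colors connects to the top vertex through a nested chain whose intermediate color counts are squeezed, so all its edges have weight zero) is the right way to justify that description, and the boundary case of the chain extension that you worried about is handled exactly as you propose.

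The genuine gap is the off-by-one that you flagged but did not resolve, and it sits at the final step. From the displayed formula for $\varphi_e$ you correctly deduce that $\min_p\varphi_e(p\circ f)=0$ if and only if some square of size $e$ omits a color, and that the largest such $e$ is $d$. But $\mathrm{depth}(f)$ is defined as $\frac1M\max\{e : \min_p\varphi_e(p\circ f)=0\}$, which under your own characterization equals $d/M$; your reinterpretation of depth as ``the first scale at which uniformity becomes impossible'' is the different number $(d+1)/M$ and does not follow from the definition you quoted. The same shift affects (1): your characterization only yields $\mathrm{depth}(f)\ge 1/M$ for non-constant $f$, and a $2\times 2$ checkerboard is a non-constant image with $\max\{e:\min_p\varphi_e(p\circ f)=0\}=1$, so under the literal reading the ``only if'' direction of (1) fails. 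The proposition is consistent only if $\varphi_e$ is read as ranging over squares of size $e-1$; that shifted reading is what the paper's own proof implicitly invokes when it asserts that ``any square of size at least $e+1$ is contained in $B$'' and that some square of size $e$ omits a color. Unless you adopt that convention explicitly at the outset, your argument as written proves $M\cdot\mathrm{depth}(f)=d$ rather than $d+1$.
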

\begin{proof}
Claim (1) follows immediately from the definition of $\mathrm{depth}(f)$. 
Let us  show claim (2). Suppose that $f$ is not constant. Then $B \neq \Box_{M,M}$, and hence one can consider the maximal size of squares that is not contained in $B$. We also note that $M \cdot \mathrm{depth}(f) \geq 2$. We put 
 \[
 e := M \cdot \mathrm{depth}(f) - 1 \geq 1. 
 \]
 By the definition of $\mathrm{depth}(f)$, any square of size at least $e+1$ is contained in $B$. This fact implies $d \leq e$. 
 Since $M \cdot \mathrm{depth}(f) > e$, there is a square $\Box_1 \in \Box_{M,M}$ of size $e$  such that $f(\Box_1) \neq f([M] \times [M])$. 
 Then $\Box_1 \not\in B$, and so $e \leq d$. We conclude that 
 \[
 M \cdot \mathrm{depth}(f) = e + 1 = d + 1. 
 \]
\end{proof}

As we see in Proposition \ref{prop:VR-depth}(1), the depth of any one-color image is $1/M$. 
Conversely, the depth of a complex image shown on the right in Figure \ref{apples} is very large. 
In other words, the depth of an image is an index for measuring the complexity of the given image. 
Meaningful images are expected to have a slightly large depth because the colors are considered to be biased to some extent. 
In Figure \ref{giraf}, the value $\left(\sum_{i,j \in \Box}|f(i)-f(j)|\right)$ is represented by the color strength when $d$ is gradually increased. 
As $d$ increases, the cells become finer. When a black cell appears for the first time at time $d$, then $d$ corresponds to the depth. 
\begin{figure}[H]
\begin{center}
\includegraphics[width=0.4\linewidth]{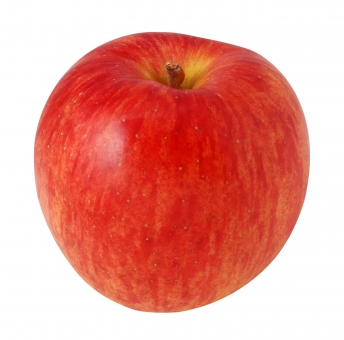}
\includegraphics[width=0.4\linewidth]{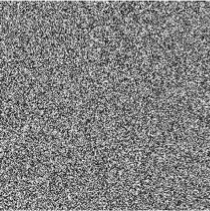}
\caption{Right has very large depth, and left is intermediate.}
\label{apples}
\end{center}
\end{figure}
\begin{figure}[htb]
\begin{center}
\includegraphics[width=\linewidth]{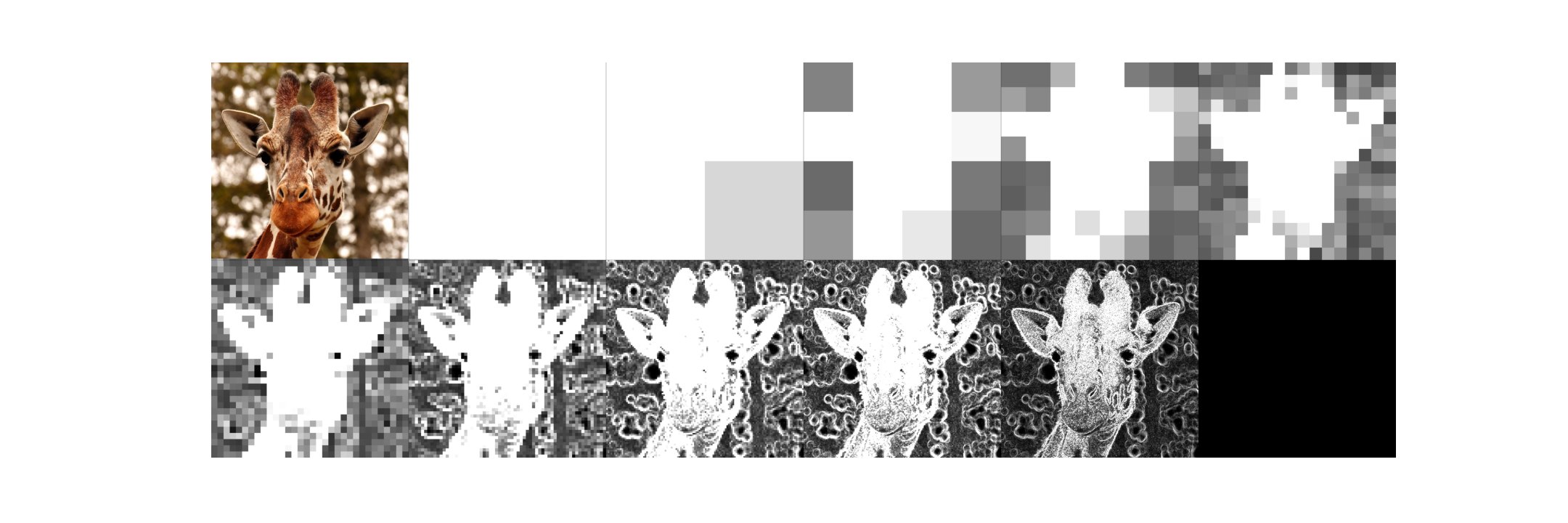}
\end{center}
\caption{The depth gets larger as it goes to right bottom from upper left. The top left image is the original image.}
\label{giraf}
\end{figure}

In the papers \cite{Asao19} and \cite{Asao19letter}, the first and the third authors show some interesting asymptotic behaviors of depth as $M \to \infty$. For example, it is shown in \cite{Asao19} and \cite{Asao19letter} that almost all images have very large depth, which means 
almost all images are
too complex to have visual information. This result suggests us a possibility that computer can classify images into meaningful or -less ones by calculating depth.

As we see in Proposition \ref{prop:VR-depth}(2), our Vietoris-Rips complex obtained from an image contains information about the depth and thus also the complexity of the image. Hence it would be worth investing this object for image analysis.


\section{Connected components and maximal simplices of the Vietoris-Rips complex}
\label{section:connected-component}

In this section, we explain the relationship between the Vietoris-Rips complex constructed in the previous section and the original image. In particular, we note that all squares belonging to the same simplex have the same number of colors admitting error controlled by the parameter $\varepsilon$. Clearly, it corresponds to considering the 0-th persistent homology. We first define a feature {\it information concentration}, and we note that a square with minimal size among those in the same simplex has high information concentration. This leads to a method of object detection that we actualize in \S~\ref{section:experiment}.

\begin{df}
Let $f \colon [M]\times [N] \longrightarrow [C]$ be an image. 
For a subset $D \subset [M]\times [N]$, we define the {\it information concentration} of $D$ by the ratio $\#f(D)/\# D$, where the numerator is the number of colors of $f$ restricted to $D$, and the denominator is the area of $D$.
\end{df}

By definition, it quantifies the extent to which the information of an entire image is locally concentrated. Hence detecting areas with high information concentration are candidates of areas containing salient objects, or their boundaries. Now we explain the properties of the Vietoris-Rips complex we construct.


\subsection{$\varepsilon = 0$ case}
In case $\varepsilon = 0$, each maximal simplex of the Vietoris-Rips complex obtained from the pseudo-metric space $(V_{M, M}, d_f)$ corresponds to each connected component of the graph obtained from the weighted graph $(V_{M, M}, E_{M, M}, w_f)$ by deleting edges with positive weight. It is easily seen that all squares belonging to the same connected component have the same number of colors. Hence the smaller size the square has, the higher concentration of information it has relative to the other squares in the component. This observation leads us to consider that information of the image concentrates on the square with the minimal size in the component which has a large number of colors. In particular, the component containing the vertex $[M, M, 0]\times [M, M, 0]$ which corresponds to the whole of the image has much information.


\subsection{$\varepsilon > 0$ case}\label{positive}
In case $\varepsilon > 0$, similar to the previous case, all squares belonging to the same simplex have the same number of colors admitting error $\varepsilon$. 
Hence the smaller size the square has, the higher concentration of information it has relative to the other squares in the component. 
However, it is difficult to describe explicitly or find the maximal simplices unlike the case of $\varepsilon = 0$.
(It reduces to solve the clique problem for a graph with $O(M^3)$ vertices.) 
On the other hand, for the purpose of detecting the domain on which information concentrates, it is sufficient to find squares with minimal size in the maximal simplex containing the vertex $[M, M, 0]\times [M, M, 0]$ which corresponds to the whole image. 
We can achieve it by finding the minimal size squares which have at least $c-\varepsilon$ colors, where we suppose that the whole image has $c$ colors. 
This process is independent of the choice of pseudo-metric $d$, and contains the method in the case of $\varepsilon = 0$.

\subsection{Robustness to noise}
As mentioned in \S \ref{positive}, the parameter $\varepsilon$ of Vietoris-Rips complex can be regarded as an error we can admit for counting the number of colors. Hence any noise on the image can be neglected for detecting salient objects by enlarging the parameter $\varepsilon$ appropriately.



\section{Numerical experiments}
\label{section:experiment}
In this section, we apply the method explained in \S \ref{section:connected-component} to the salient object detection. We first explain our algorithm in \S \ref{algorithm}, then show results in \S \ref{experiments}.
We find minimal size vertices belonging to the connected component containing the vertex corresponding to the whole image.  The squares corresponding to these vertices are expected to contain salient objects as explained in \S \ref{section:connected-component}. Fluctuation of the parameter $\varepsilon$ produces robustness to the noise of the image. 

\subsection{Algorithms}\label{algorithm}
Before we start explaining our algorithm, we introduce some notations. 
For notational simplicity, we put 
\[
v_{k,i,j} := [N,N-k,i]\times[N,N-k,j] \in  V_{N,N}. 
\]
Note that 
\[
V_{N,N} = 
\left\{ v_{k,i,j} \,\, \middle| 
\begin{array}l
0 \leq k \leq N-1, 
\\
0 \leq i \leq k, 
\\
0 \leq j \leq k
\end{array}
\right\}. 
\]
We index vertices $v \in V_{N,N}$ by the map $g \colon V_{N,N}\longrightarrow \mathbb{Z}_{\geq 0}$ defined as 
\[
    g(v_{k,i,j}
    ) := (k+1)i+j+\sum_{l=0}^{k}l^2.
\]
If we consider the lexicographical order on the set $V_{N,N}$ with respect to the index $(k,i,j)$ of $v_{k,i,j}$, the map $g \colon V_{N,N} \longrightarrow \mathbb{Z}_{\geq 0}$ is an order preserving injection. 

Note that, the larger size a vertex has, the smaller index it is given by $g$ (see Figure \ref{fig:index}). 
\begin{figure}[H]
\begin{center}
\begin{tikzpicture}
\draw[thick][red] (0,0) rectangle (2,2) ;
\draw (1,1)[red] node{{\Huge 0}};
\end{tikzpicture}
\begin{tikzpicture}
\draw[thick][gray] (0,0) rectangle (2,2) ;
\draw[thick][gray](0, 0.66) -- (2, 0.66);
\draw[thick][gray](1.33, 1.33) -- (2, 1.33);
\draw[thick][gray](0.66, 0) -- (0.66, 0.66);
\draw[thick][gray](1.33, 0) -- (1.33, 2);
\draw[thick][red] (0,0.66) rectangle (1.33,2) ;
\draw (0.66,1.33)[red] node{{\Huge 1}};
\end{tikzpicture}
\begin{tikzpicture}
\draw[thick][gray] (0,0) rectangle (2,2) ;
\draw[thick][gray](0, 1.33) -- (0.66, 1.33);
\draw[thick][gray](0, 0.66) -- (0.66, 0.66);
\draw[thick][gray](0.66, 0) -- (0.66, 0.66);
\draw[thick][gray](1.33, 0) -- (1.33, 0.66);
\draw[thick][red] (0.66,0.66) rectangle (2,2) ;
\draw (1.33,1.33)[red] node{{\Huge 2}};
\end{tikzpicture}

\vspace{0.3cm}

\begin{tikzpicture}
\draw[thick][gray] (0,0) rectangle (2,2) ;
\draw[thick][gray](1.33, 0.66) -- (2, 0.66);
\draw[thick][gray](0, 1.33) -- (2, 1.33);
\draw[thick][gray](0.66, 1.33) -- (0.66, 2);
\draw[thick][gray](1.33, 0) -- (1.33, 2);
\draw[thick][red] (0,0) rectangle (1.33,1.33) ;
\draw (0.66,0.66)[red] node{{\Huge 3}};
\end{tikzpicture}
\begin{tikzpicture}
\draw[thick][gray] (0,0) rectangle (2,2) ;
\draw[thick][gray](0, 0.66) -- (0.66, 0.66);
\draw[thick][gray](0, 1.33) -- (2, 1.33);
\draw[thick][gray](0.66, 1.33) -- (0.66, 2);
\draw[thick][gray](1.33, 1.33) -- (1.33, 2);
\draw[thick][red] (0.66,0) rectangle (2,1.33) ;
\draw (1.33, 0.66)[red] node{{\Huge 4}};
\end{tikzpicture}
\begin{tikzpicture}
\draw[thick][red] (-0.1,-0.1) rectangle (0.56,0.56) ;
\draw[thick][red] (0.66,0.66) rectangle (1.33,1.33) ;
\draw[thick][red] (1.43,1.43) rectangle (2.1,2.1) ;
\draw[thick][red] (0.66,-0.1) rectangle (1.33,0.56) ;
\draw[thick][red] (1.43,-0.1) rectangle (2.1,0.56) ;
\draw[thick][red] (-0.1,0.66) rectangle (0.56,1.33) ;
\draw[thick][red] (-0.1,1.43) rectangle (0.56,2.1) ;
\draw[thick][red] (0.66, 1.43) rectangle (1.33, 2.1) ;
\draw[thick][red] (1.43, 0.66) rectangle (2.1, 1.33) ;
\draw (0.23, 0.23)[red] node{{\Large 11}};
\draw (1, 1)[red] node{{\Large 9}};
\draw (1.76, 1.76)[red] node{{\Large 7}};
\draw (1, 0.23)[red] node{{\Large 12}};
\draw (1.76, 0.23)[red] node{{\Large 13}};
\draw (0.23, 1)[red] node{{\Large 8}};
\draw (0.23, 1.76)[red] node{{\Large 5}};
\draw (1, 1.76)[red] node{{\Large 6}};
\draw (1.76, 1)[red] node{{\Large 10}};
\end{tikzpicture}
\caption{The index $g$ for $N = 3$.}
\label{fig:index}
\end{center}
\end{figure}
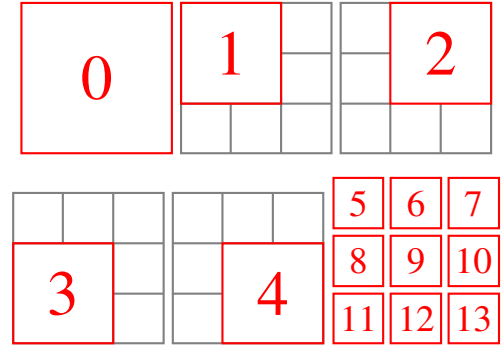
\begin{ex}
\label{ex:vertex}
Let us consider the case that $N=3$. 
For $k=0$, the only square of size $3 \times 3$ is 
\begin{align*}
    [3,3,0]\times[3,3,0],
\end{align*}
and it has the index $0$.
For $k=1$, there are exactly 4 squares of size $2 \times 2$. They are 
\begin{align*}
    &[3,2,0]\times[3,2,0], \,\,\, [3,2,0]\times[3,2,1], \\
    &[3,2,1]\times[3,2,0], \,\,\, [3,2,1]\times[3,2,1],
\end{align*}
and have indices $1, 2, 3, 4$, respectively.
For $k=2$, there are exactly 9 squares of size $1 \times 1$. They are 
\begin{align*}
    &[3,1,0]\times[3,1,0], \,\,\, [3,1,0]\times[3,1,1], \,\,\, [3,1,0]\times[3,1,2], \\
    &[3,1,1]\times[3,1,0], \,\,\, [3,1,1]\times[3,1,1], \,\,\, [3,1,1]\times[3,1,2], \\
    &[3,1,2]\times[3,1,0], \,\,\, [3,1,2]\times[3,1,1], \,\,\, [3,1,2]\times[3,1,2],
\end{align*}
and have indices $5, 6, 7, 8, 9, 10, 11, 12, 13$, respectively.
\end{ex}
With the above preparations, now we explain the algorithm \ref{algorithm:1-skelton}. 
This algorithm outputs three lists $V_{\mathrm{from}}$, $V_{\mathrm{to}}$ and $W$ from an $N\times N$-pixel image as an input. The lists $V_{\mathrm{from}}$ and $V_{\mathrm{to}}$ store the sources and targets of edges $E_{N, N}$ respectively with appropriate orientations and in an appropriate order explained below. 
The list $W$ corresponds to the adjacent matrix of the weighted graph $(V_{N, N}, E_{N, N}, w_{f})$, while the weights are stored in the order explained as follows.


By orienting the edges by the inclusion relationship between the vertices they connect, we regard $E_{N,N} \subset V_{N,N} \times V_{N,N}$, that is, 
\[
E_{N,N} = 
\left\{ (B, A) \in  V_{N,N} \times V_{N,N} \,\, \middle| 
\begin{array}l
A \subset B,   
\\
\textrm{the size of $B$}  = 1 + \textrm{the size of $A$} 
\end{array}
\right\}. 
\]
We denote the projection $E_{N, N} \longrightarrow V_{N, N}$ to the $i$-th coordinate by $\mathrm{pr}_i$ for $i = 1, 2$. Note that 
\begin{align*}
    \mathrm{pr}_1^{-1}(v_{k,i,j}) = \{(v_{k,i,j}, &v_{k+1,i,j}), \, (v_{k,i,j}, v_{k+1,i,j+1}), 
\\
&(v_{k,i,j}, v_{k+1,i+1,j}), \, (v_{k,i,j}, v_{k+1,i+1,j+1}) \}.  
\end{align*}
Namely, there are exactly four oriented edges which start from $v_{k,i,j}$, and their targets are $v_{k+1,i,j}$, $v_{k+1,i,j+1}$, $v_{k+1,i+1,j}$, and $v_{k+1,i+1,j+1}$. 
We define a map $h \colon E_{N,N} \longrightarrow \mathbb{Z}_{\geq 0}$ by 
\begin{align*}
    h((v_{k,i,j}, v_{k+1,i,j})) &:= 4g(v_{k,i,j}), 
    \\
    h((v_{k,i,j}, v_{k+1,i,j+1})) &:= 4g(v_{k,i,j}) + 1, 
    \\
    h((v_{k,i,j}, v_{k+1,i+1,j})) &:= 4g(v_{k,i,j}) + 2, 
    \\
    h((v_{k,i,j}, v_{k+1,i+1,j+1})) &:= 4g(v_{k,i,j}) + 3.  
\end{align*}
If we consider the lexicographical order on the set $E_{N,N} \subset V_{N,N} \times V_{N,N}$ with respect to the order induced by the index $g$, the map $h \colon E_{N,N} \longrightarrow \mathbb{Z}_{\geq 0}$ is an order preserving injection. 
By using $h$, we can define the lists $V_{\mathrm{from}}$, $V_{\mathrm{to}}$ and $W$ as follows: 
\begin{align*}
    V_{\mathrm{from}}[s] &= g(\mathrm{pr}_1(h^{-1}(s))), \\
    V_{\mathrm{to}}[s] &= g(\mathrm{pr}_2(h^{-1}(s))), \\
    W[s] &= w_{f}(h^{-1}(s)).
\end{align*}
The way to represent a graph, as above, by a list of sources of edges $V_{\mathrm{from}}$, a list of targets of edges $V_{\mathrm{from}}$, and a list of weights on edges $W$ is called {\it Compressed Sparse Row (CSR) form}. It is why we set the above three lists that we can deal with graphs compressed in the CSR form very fast in the Scipy library of Python.

\begin{algorithm}[h]
\caption{Construction of the 1-skelton of the Vietoris-Rips complex using all squares}
 \label{algorithm:1-skelton}
 \begin{algorithmic}
 \REQUIRE $f$ is a $C$-color $N \times N$-pixel image. $\varepsilon$ is a non-negative real number.
    \ENSURE $V_{\mathrm{from}}$, $V_{\mathrm{to}}$, $W$, $\widetilde{W}$. 
    \STATE $V_{\mathrm{from}}$, $V_{\mathrm{to}}$, $W$, $\widetilde{W}$ are empty lists. 
    \FOR{$k = 0$ to $N-1$}
        \FOR{$i = 0$ to $k$} 
            \FOR{$j = 0$ to $k$}
            \STATE Assign $g(v_{k, i, j})$ to list $V_{\mathrm{from}}$.
            \STATE Assign $g(v_{k+1, i, j})$ to list $V_{\mathrm{to}}$.
            \STATE Assign $\#f(v_{k, i, j}) - \#f(v_{k+1, i, j})$ to list $W$.
            \STATE Assign $g(v_{k, i, j})$ to list $V_{\mathrm{from}}$.
            \STATE Assign $g(v_{k+1, i, j+1})$ to list $V_{\mathrm{to}}$.
            \STATE Assign $\#f(v_{k, i, j}) - \#f(v_{k+1, i, j+1})$ to list $W$.
            \STATE Assign $g(v_{k, i, j})$ to list $V_{\mathrm{from}}$.
            \STATE Assign $g(v_{k+1, i+1, j})$ to list $V_{\mathrm{to}}$.
            \STATE Assign $\#f(v_{k,i,j}) - \#f(v_{k+1,i+1,j})$ to list $W$.
            \STATE Assign $g(v_{k, i, j})$ to list $V_{\mathrm{from}}$.
            \STATE Assign $g(v_{k+1, i+1, j+1})$ to list $V_{\mathrm{to}}$.
            \STATE Assign $\#f(v_{k,i,j}) - \#f(v_{k+1,i+1,j+1})$ to list $W$.
            \ENDFOR
        \ENDFOR
    \ENDFOR
    \FOR{$s=0$ to $\#W - 1$}
    \IF{$W[s] > \varepsilon$}
    \STATE Assign $0$ to list $\widetilde{W}$
    \ELSE
    \STATE Assign $1$ to list $\widetilde{W}$
    \ENDIF
    \ENDFOR
    \end{algorithmic}
\end{algorithm}

The connected components of the 1-skeleton obtained from the algorithm \ref{algorithm:1-skelton} can be indexed by using \textit{sparse.csgraph.connected\_components} method in the Scipy library. 
This method outputs a list ${\mathrm CC}$ of indexed connected components from an input of a graph compressed in the CSR form.
The index of components is given by referring to the smallest index of vertices which belong to the component. 
The index of the component containing the vertex with index 0 is 0. This component and the vertices with maximal indices contained in this component have good information for object detection as explained in \S \ref{section:connected-component}. 
We conduct object detection by highlighting squares whose corresponding vertices are contained in the index 0 component $C_0$, and have minimal size among $C_0$. To find all minimal size squares in $C_0$, we first choose a vertex with maximum index, namely, $v_{\mathrm{tmp}}$. Then we check each vertex with the same size as $v_{\mathrm{tmp}}$ if it is contained in $C_0$ or not. We highlight the corresponding square when the vertex is contained. The algorithm \ref{algorithm:detection} is a pseudo-code.
\begin{algorithm}[h]
\caption{Highlighting minimal size squares}
\label{algorithm:detection}
\begin{algorithmic}
\REQUIRE $f$ is a $C$-color $N \times N$-pixel image. $V_{\mathrm{from}}$, $V_{\mathrm{to}}$, $\widetilde{W}$ are lists obtained by the algorithm \ref{algorithm:1-skelton}.
\ENSURE  $d$ is the detection result of $f$. 
\STATE $d$ is a $1$-color $N \times N$-pixel image. 
\STATE $v_{\mathrm{tmp}}, k_{\mathrm{tmp}}, k' \leftarrow 0$. 
\STATE Apply \textit{csgraph.connected\_components} to $V_{\mathrm{from}}$, $V_{\mathrm{to}}$ and $\widetilde{W}$ to obtain $\mathrm{CC}$. 
\FOR{$s=0$ to $\# \mathrm{CC} -1$}
    \IF{$\mathrm{CC}[s] = 0$}
        \STATE $v_{\mathrm{tmp}} \leftarrow s$
    \ENDIF
\ENDFOR
\STATE $k_\mathrm{tmp} \leftarrow \text{size of the square $v_{\mathrm{tmp}}$}$
\STATE $k' \leftarrow N-k_\mathrm{tmp}$
\FOR{$i=0$ to $k'$}
    \FOR{$j=0$ to $k'$}
    \IF{$\mathrm{CC}[g(v_{k',i,j})] = 0$}
        \STATE $d|_{v_{k',i,j}} \leftarrow f|_{v_{k',i,j}}$
    \ENDIF
    \ENDFOR
\ENDFOR
\end{algorithmic}
\end{algorithm}

\subsection{Experiments}\label{experiments}
In the following, we apply the method explained in \S \ref{section:connected-component} to the salient  object  detection for the following two $100\times 100$-images (Figure \ref{fig2}). They are pictures of \textbf{(a)} a Japanese tea and a sweet and \textbf{(b)} a squirrel.
In this experiment, the parameter $\varepsilon$ is set to 24 levels, starting from 10 and going up to 240 in increments of 10. As written in the pseudo-code \ref{algorithm:detection}, there is room for a choice of a class of domains in our process. We used two classes in this experiment, all squares and all rectangles.
The results with a class of squares, which is described in \S 4, can be found in \S 5.1, along with a comparison of the detection results using the second minimal squares in addition to the minimal squares. 
The results with a class of rectangles can be found in \S 5.2. We first expected that the process using rectangles improve the accuracy of the detection. However, we checked that there is no great difference despite that the rectangle case computationally costs more than the square case. 
While this series of experiments were conducted on grayscale images, the same method can be applied to general multi-channel images by appropriately setting weights on edges.


\begin{figure}[htb]
\centering
\subfigure[Japanese tea and a sweet]{
\includegraphics[width=0.4\linewidth]{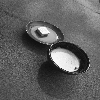}
}
\subfigure[Squirrel]{
\includegraphics[width=0.4\linewidth]{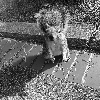}
}
\caption{$M\times M$-images of tea with a tea sweet, and squirrel ($M = 100$). The numbers of colors are $240, 254$ respectively.}
\label{fig2}
\end{figure}


\subsubsection{Process using all squares}
In this subsection, we describe the results of experiments using a class of all square domains. 
The process that mainly occupies the calculation time is the step of counting all the squares for constructing the graph, and it takes $O(M^{3})$ for an $M\times M$-image (the order of square sum from 1 to $M$). 


\begin{figure}[htb]
\centering
\subfigure[Japanese tea and a sweet]{
\includegraphics[width=\linewidth]{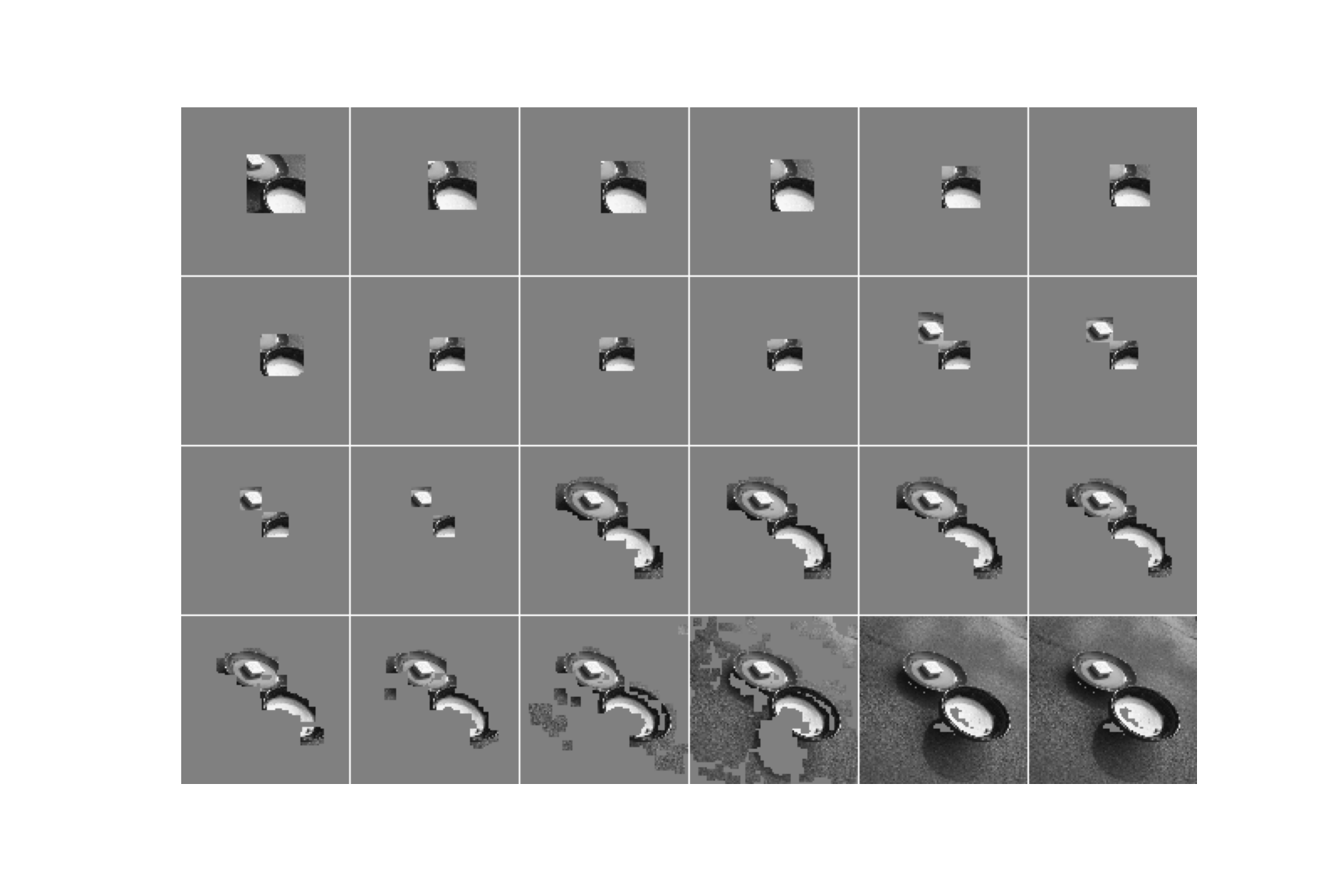}
}
\subfigure[Squirrel]{
\includegraphics[width=\linewidth]{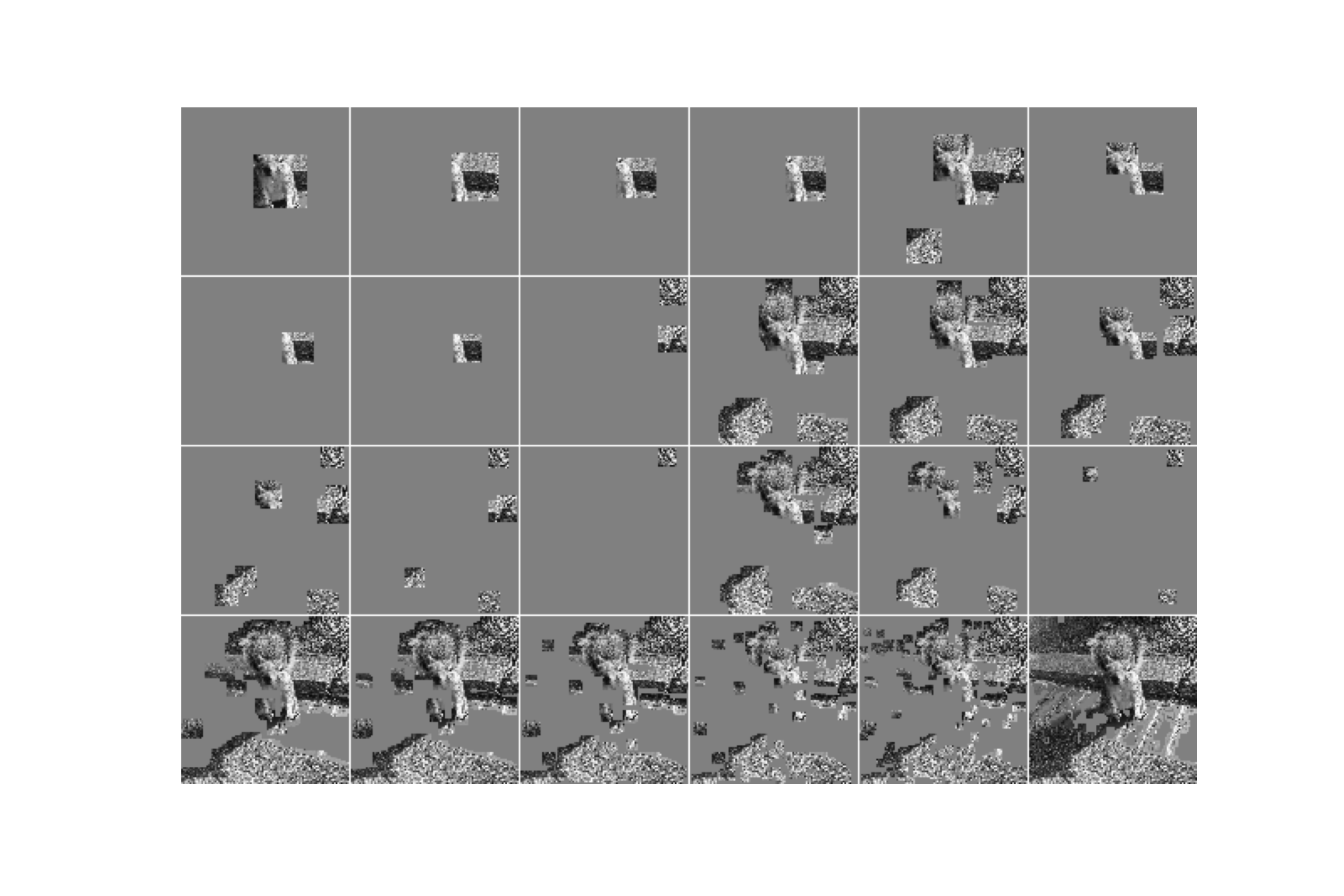}
}
\caption{$c = 240$, $c = 254$ respectively from left. $n = 0, 1, 2, \ldots$ from upper left to bottom right.}
\label{fig3}
\end{figure}
Figure \ref{fig3} represents the squares in $c$-color images with minimal size that uses at least $c-n\varepsilon \  ( n = 0, 1, 2, \ldots)$ colors for $\varepsilon = 10$ and $c = 240, 254$. The actual detected area is shown in the original image, while the other areas are filled in gray. 
By adjusting the error parameter $\varepsilon$ appropriately, we can see that regions with high concentration of information certainly detect the salient object. $n = 14$ appears to detect the main object region in the image. 
Figure \ref{fig4} shows the cumulative display of the regions detected in Figure \ref{fig3} up to $n$-th for visibility. 

\begin{figure}[htb]
\centering
\subfigure[Japanese tea and a sweet]{
\includegraphics[width=\linewidth]{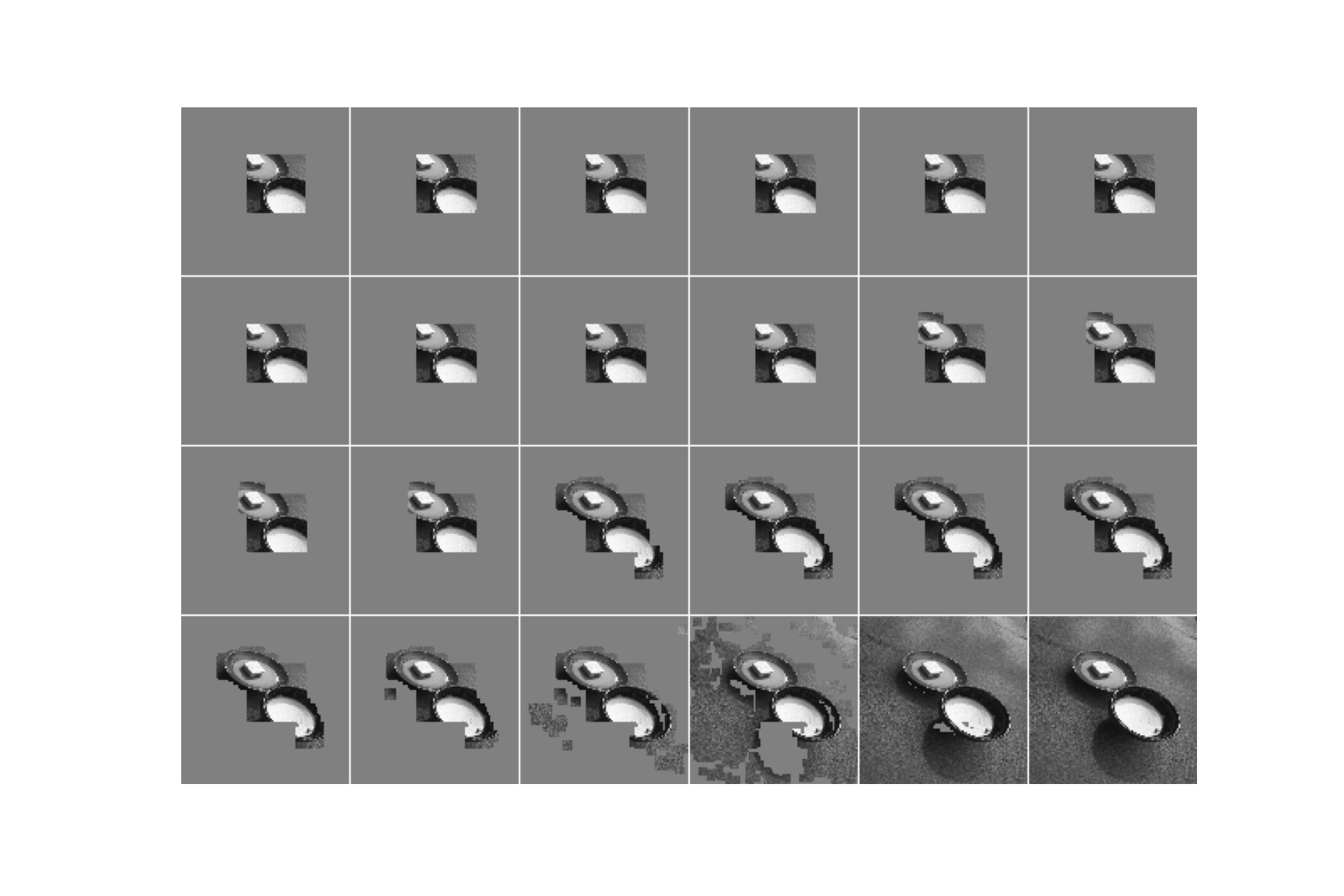}
}
\subfigure[Squirrel]{
\includegraphics[width=\linewidth]{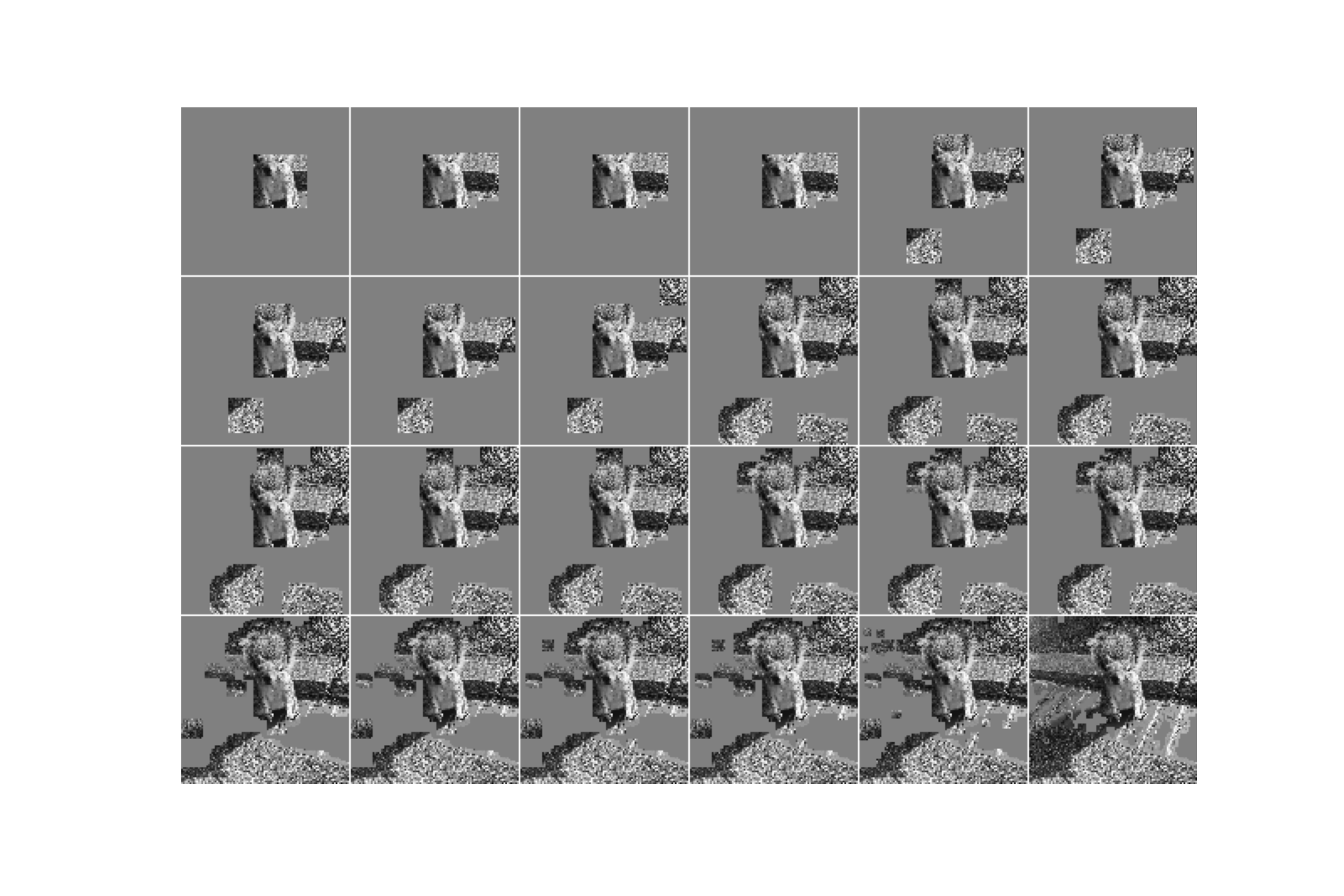}
}
\caption{the cumulative re-display of the $n$-th region detected in Fugure \ref{fig3}.}
\label{fig4}
\end{figure}

Figure \ref{fig5} represents images from Figure \ref{fig4} that seems detecting objects well. Such choosing process corresponds to a process of picking up the image whose gray regions rapidly decrease. 

\begin{figure}[htb]
\centering
\subfigure[Japanese tea and a sweet]{
\includegraphics[width=0.4\linewidth]{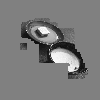}
}
\subfigure[Squirrel]{
\includegraphics[width=0.4\linewidth]{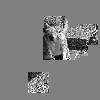}
}
\caption{$n = 19$, $n = 4$ from left.}
\label{fig5}
\end{figure}
Figure \ref{fig6} represents the results of a similar process that uses both squares with minimal and the next minimal size. It seems that detection is done better. 

\begin{figure}[htb]
\centering
\subfigure[Japanese tea and a sweet]{
\includegraphics[width=\linewidth]{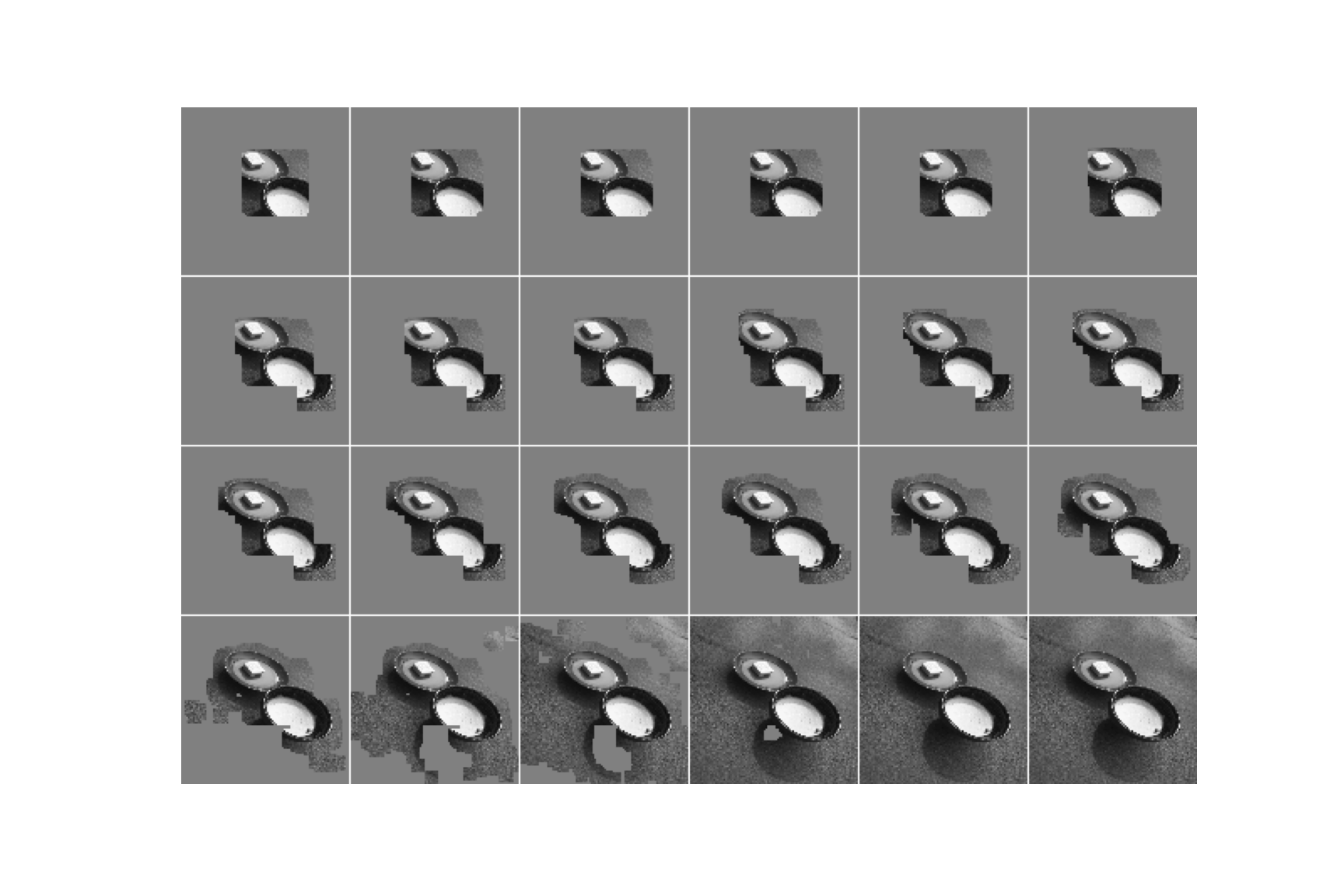}
}
\subfigure[Squirrel]{
\includegraphics[width=\linewidth]{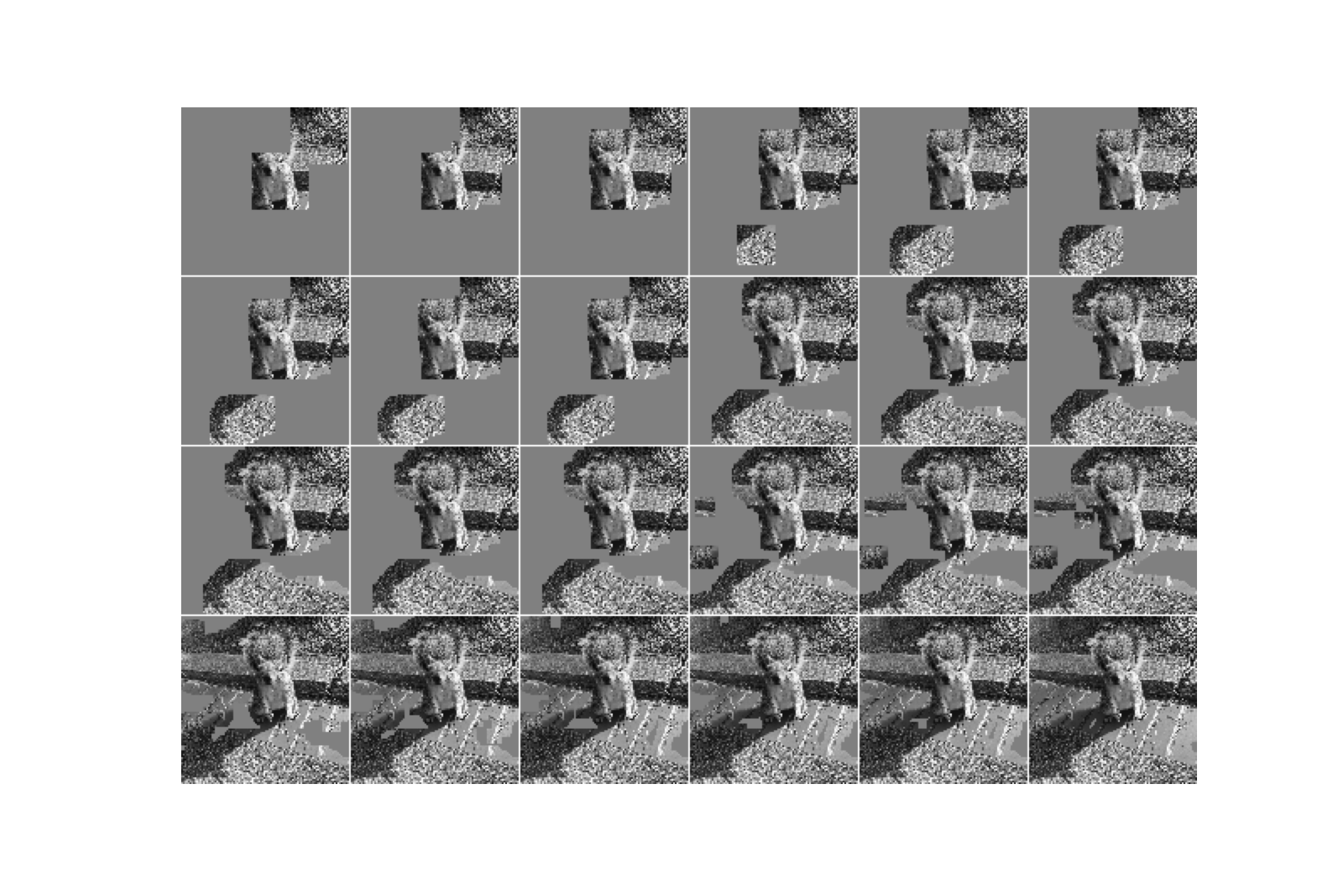}
}
\caption{Representing cumulatively the squares with minimal and next minimal.}
\label{fig6}
\end{figure}
%
%
%
%
%
\subsubsection{Process using all rectangles}

In the previous subsection, we discussed object detection in the case of using squares, but we can also consider regions other than squares as subregions of the image.
In this subsection, we will examine the results when using rectangles.
The computational cost of using a square was $O(M^3)$, but the computational cost of using a rectangle is $O(M^4)$ (the order of the square of the sum from 1 to $M$).
Since counting all the rectangles would not only take a lot of time but also the accuracy would not be improved by using extremely long and narrow rectangles for detection, we experimented with rectangles with aspect ratios from $1/3$ to $3$. The result is shown in Figure \ref{fig7}. 



\begin{figure}[htb]
\centering
\subfigure[Japanese tea and a sweet]{
\includegraphics[width=\linewidth]{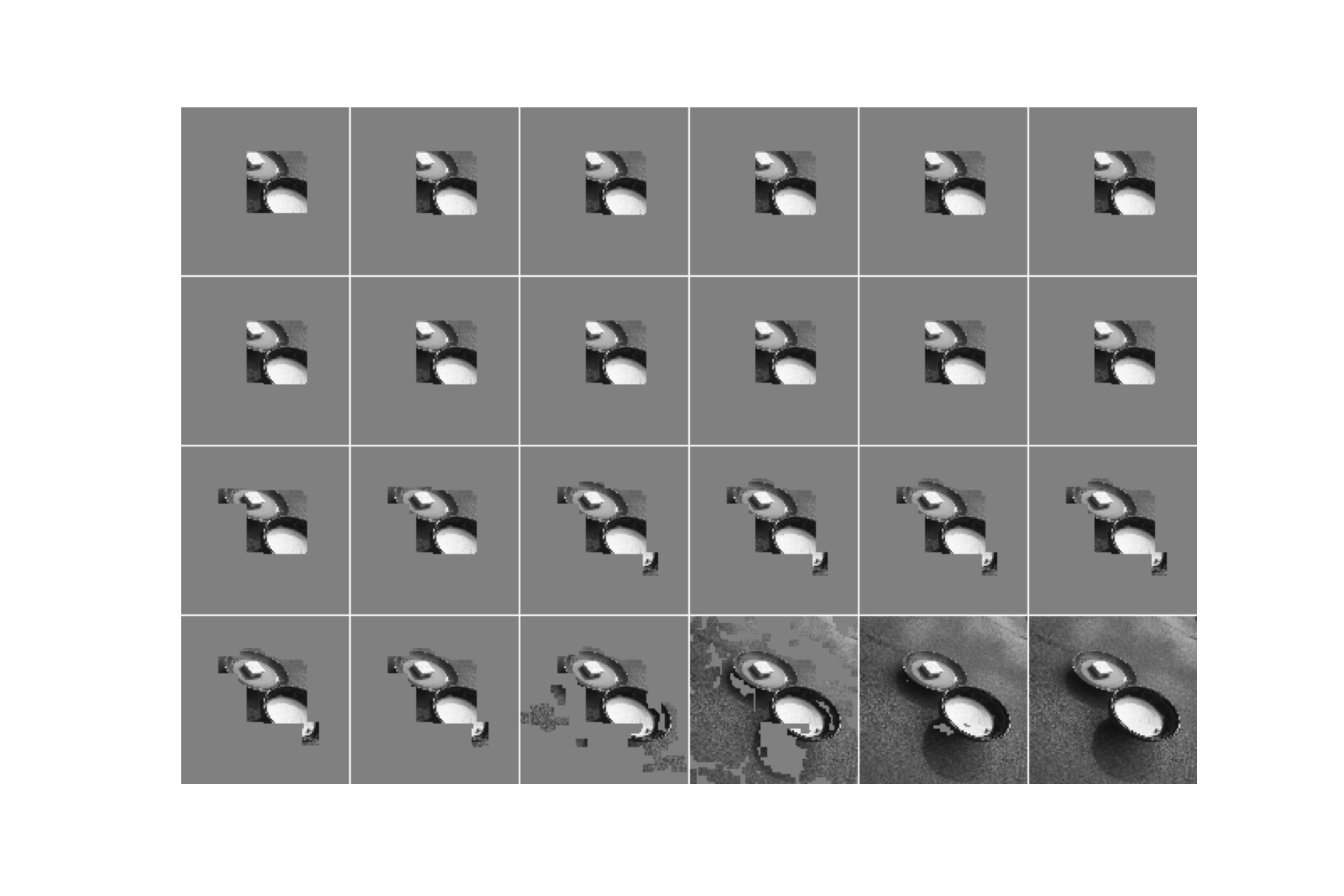}
}
\subfigure[Squirrel]{
\includegraphics[width=\linewidth]{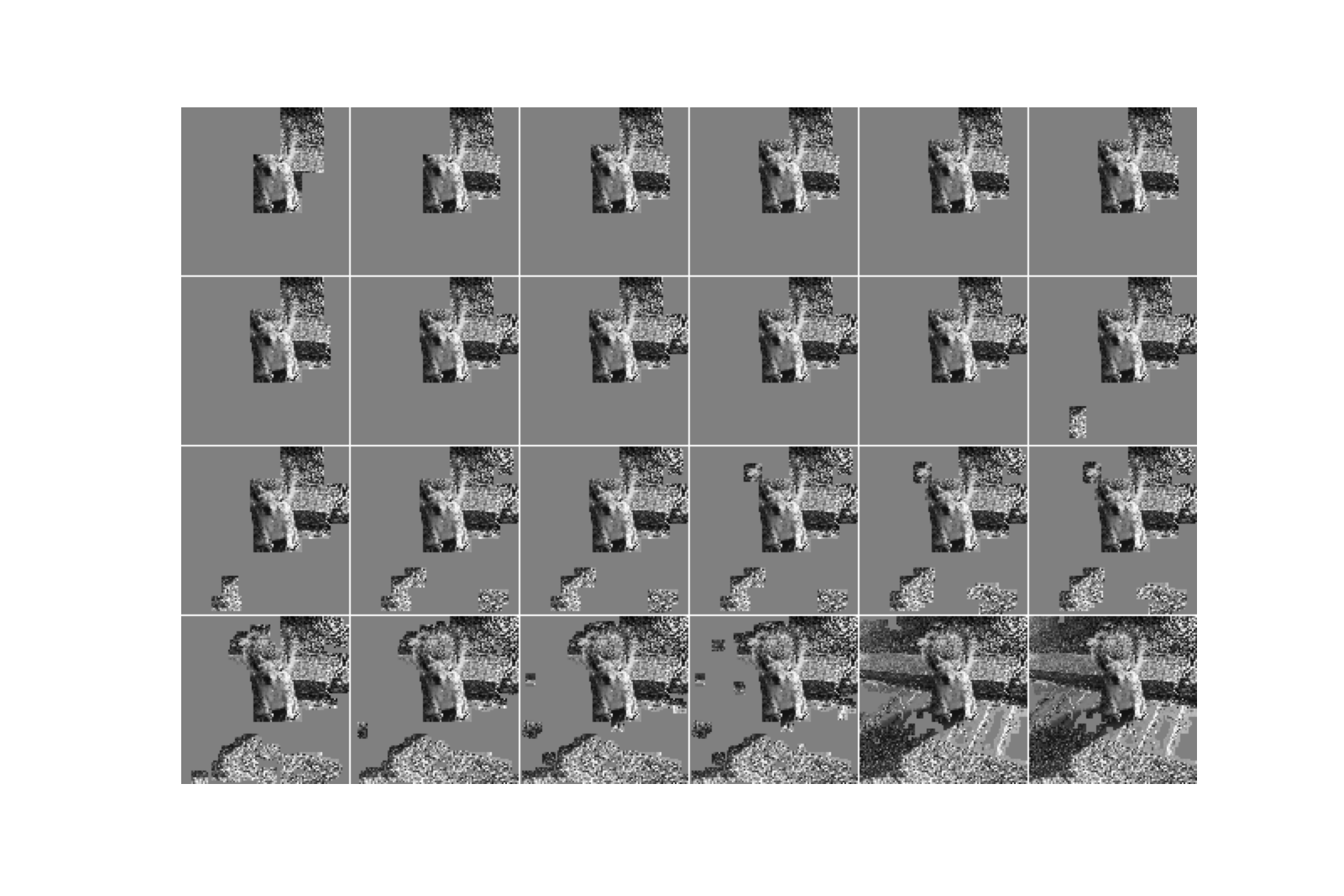}
}
\caption{Results of process using rectangles.}
\label{fig7}
\end{figure}

We at first considered that the detection accuracy will get better by detecting detailed regions if we use a larger class of domains. 
However, although it took much more time for the calculation than the process using squares, it seems that detection accuracy does not change. The reason may be that the increase of rectangles lost the agility on the variation of the regions with respect to the variation of $\varepsilon$,
and detection accuracy does not change. Hence we consider that the accuracy of our method depends heavily on the choice of a family of domains by which we construct the Vietoris-Rips complex.
%

\section{Conclusion and future work}

We constructed the weighted graph $(V_{M,N}, E_{M,N}, w_f)$ from an image $f$, and 
also construct the Vietoris-Rips complex $S_\varepsilon(f)$ from the weighted graph $(V_{M,N}, E_{M,N}, w_f)$ by regarding this weighted graph as a pseudo-metric space. 
We explained that the simplex of the Vietoris-Rips complex $S_\varepsilon(f)$ consists of domains that are closely located and have close numbers of colors. We further explained that the parameter $\varepsilon$ of the Vietoris-Rips complex controls the error of closeness of the numbers of colors. 

By some numerical experiments, we saw that our process can detect salient objects by focusing on squares with minimal size in a maximal simplex of the Vietoris-Rips complex. 
To investigate a suitable class of domains, we conducted two kinds of experiments using square regions and rectangular regions as the classes of domains, respectively. We compared the results, and we found that the detection accuracy of the latter case is not much different from that of the former case, even though the computational cost of the latter case is much higher. 
However, in general, it is expected that the detection accuracy depends on the class of domains. In order to test this hypothesis, we plan to conduct a comparative study in the future with different forms of domains and with a reduced number of domains. 
Investigating various domains includes comparing the method here to the performance of object detection using quadtrees. This is because a quadtree method can be represented as our process with a suitable class of the domains. If we can find an appropriate domain through a comparative study of domains, we will be able to improve the accuracy and reduce the computational cost of our method. In addition, optimizing the hyperparameter $\varepsilon$ in our method is another future work. 

Also, we would like to understand the correspondence between images and the Vietoris-Rips complex constructed from images. 
Let us explain this more formally, in the mathematical language of \textit{category theory},  to make our vision clear. 
The study of image analysis deals with images and operations between them, for example, noise removal, reformation of images, and so on. 
This means that we are working in a category of images. From this viewpoint, we gave in this paper correspondence between objects in the categories of images, simplicial complexes, and further abelian groups by taking homology. 
What we would like to do in the future is to analyze these correspondences as functors from the category of images to category of simplicial complexes or to category of abelian groups. 
That is, we study not only simplicial complexes or abelian groups obtained from images but also corresponding operations among them. For example, we can consider operations between simplicial complexes or abelian groups that correspond to noise removal or reshape. Since the categories of such mathematical objects have been deeply studied so far, we may obtain new and strong tools or observations for image analysis, as we did in this paper. Further, it is very attractive to investigate correspondences between  geometric features of simplicial complexes and feature values of images. We expect that the study of our functor-like construction based on colors and subregions will become one of the new frameworks for understanding images.
\ifCLASSOPTIONcompsoc
  \section*{acknowledgments}
\else
  \section*{Acknowledgment}
\fi

The second author was supported by JSPS KAKENHI Grant Number JP21J12812. 
The third author was supported by RIKEN Center for Advanced Intelligence Project (AIP).
The forth author thanks to KERNEL (DEEPCORE Inc.) for giving him a hospitality and a good environment for researching.
\ifCLASSOPTIONcaptionsoff
  \newpage
\fi



%


\bibliography{ref}
\bibliographystyle{plain}  

\end{document}